\newcommand{\R}{\mathbb{R}}
\newcommand{\tr}{{\rm tr}}
\newcommand{\col}{{\rm col}}
\newcommand{\T}{{\rm T}}
\newcommand{\TT}{{\mathbb{T}}}
\renewcommand{\arraystretch}{1}
\newtheorem{theorem}{Theorem}
\newtheorem{proof}{Proof}
\begin{document}

\allowdisplaybreaks

\preprint{APS/123-QED}

\title{Training oscillator Ising machines to assign the dynamic stability\\ of their equilibrium points}
\thanks{Y. Cheng and Z. Lin are with Charles L. Brown Department of Electrical and Computer Engineering, University of Virginia, Charlottesville, Virginia 22904, USA.}

\author{Yi Cheng}
 \email{zss7gw@virginia.edu}
\author{Zongli Lin}%
 \email{zl5y@virginia.edu}
\affiliation{%
 Charles L. Brown Department of Electrical and Computer Engineering, University of Virginia, Charlottesville, Virginia 22904, USA.
}%





\begin{abstract}
We propose a neural network model, which, with appropriate assignment of the stability of its equilibrium points (EPs), achieves Hopfield-like associative memory. The oscillator Ising machine (OIM) is an ideal candidates for such a model, as all its $0/\pi$ binary EPs are structurally stable with their dynamic stability tunable by the coupling weights. Traditional Hopfield-based models store the desired patterns by designing the coupling weights between neurons. The design of coupling weights should simultaneously take into account both the existence and the dynamic stability of the EPs for the storage of the desired patterns.
For OIMs, since all $0/\pi$ binary EPs are structurally stable, the design of the coupling weights needs only to focus on assigning appropriate stability for the $0/\pi$ binary EPs according to the  desired patterns. In this paper, we establish a connection between the stability and the Hamiltonian energy of EPs for OIMs, and, based on this connection, provide a Hamiltonian-Regularized Eigenvalue Contrastive Method (HRECM) to train the coupling weights of OIMs for assigning appropriate stability to their EPs. Finally, numerical experiments are performed to validate the effectiveness of the proposed method.

\end{abstract}

\maketitle


\section{\label{sec:level1}Introduction}

Associative memory networks have been widely studied for decades and continue to exert a profound influence in fields such as pattern recognition, artificial intelligence, and cognitive science. The Hopfield networks, as classical associative memory networks, were first conceptualized by William A. Little \cite{LITTLE1974101} and formally introduced by John J. Hopfield \cite{hopfield1982neural}. An essential problem for associative memory models is how they store the desired patterns. In \cite{hopfield1982neural}, Hopfield introduced the Hebbian rule, which was shown to result in an capacity of $0.138N$ \cite{amit1985storing}, where $N$ is the size of the network. The pseudo-inverse rule was then proposed that would result in a higher capacity of $N$, at the cost of higher computational consumption \cite{kanter1987associative}. Later, Amos J. Storkey proposed a rule whose capacity and computation consumption are intermediate between the Hebbian and the pseudo-inverse rules \cite{storkey1999basins}. Subsequently, various learning rules were studied for improving the performance of Hopfield networks \cite{demircigil2017model,ramsauer2020hopfield,wu2024uniform,millidge2022universal}. Other classical associative memory networks, named the brain-state-in-a-box (BSB) neural networks, have also gained attention because of their continuous values, box-constrained dynamics and parallel synchronous updates that guarantee strict energy minimization and robust convergence \cite{golden1986brain}. Various coupling rules have been proposed to improve the performance of BSB neural networks for associative memory \cite{perfetti1995synthesis,schultz1993collective,hui1992dynamical,park2010optimal}.

These coupling rules possess a common feature, that is, they take both the existence and the stability of the equilibrium points (EPs) into consideration simultaneously. In other words, the resulting coupling weights should first guarantee that desired patterns exist in the network as EPs and then ensure that these EPs are asymptotically stable. 
As a result, for a fixed-size network with an increasing number of desired patterns, there is a competition between meeting the existence requirement and that of dynamical stability of the desired EPs, as has been observed in the literature \cite{amit1987statistical, carpenter1991pattern,folli2017maximum,fusi2021memory}. The underlying reason of such competition is that the binary EPs of those associative memory networks are structurally unstable with respect to coupling weights and any variation of the coupling weights in an effort to ensure stability of the EPs could cause some of those EPs to disappear. To circumvent this competition, one may construct a network whose binary EPs are structurally stable and remain unchanged as the coupling weights are varied to achieve the dynamical stability of the EPs.

OIMs are exactly such appealing networks. In \cite{cheng2024control}, it has been shown that all $0/\pi$ binary EPs are structurally stable with respect to the coupling weights but their dynamical stability is determined by the coupling weights. Specifically, a binary EP $\theta^\star$ of an OIM is asymptotically stable if and only if the corresponding Jacobian matrix $A(\theta^\star)$ is negative definite, where $A(\theta^\star)$ is constructed from $\theta^\star$ and the coupling weights. To realize Hopfield-like associative memory in an OIM, our objective is to design coupling weights so that only the desired patterns are asymptotically stable EPs (that is, the corresponding $A(\theta^\star)$'s are negative definite), while all others are unstable (that is, the corresponding $A(\theta^\star)$'s are nonnegative definite). In other words, the ideal coupling weights should ensure that the largest eigenvalue of $A(\theta^\star)$ for a desired EP is negative and is positive for a non-desired EP. If we regard the largest eigenvalue as ``energy", the realization of our objective can be achieved by Boltzmann machine learning (BML), in which the coupling weights are trained so that the observed data have less energy than the others.

However, BML will incur an excessive computational load if the largest eigenvalue is directly taken as energy. In BML, Gibbs sampling is used to draw samples from the model distribution for contrastive divergence algorithm \cite{hinton2002training}. In Gibbs sampling, the $k$-th neuron's state $s_k$ updates according to the probability
$p(s_k=1)=1/(1+\exp\{-\Delta E(s)\})$,
which involves the calculation of $\Delta E(s)$ that results from flipping $s_k$ from $-1$ to $+1$. If the largest eigenvalue is used as the energy, every sampling step requires computing that eigenvalue - an $O(N^3)$ operation - resulting in markedly reduced efficiency as the network size increases.

Fortunately, we observe that the largest eigenvalue of $A(\theta^\star)$ is roughly linearly correlated with the corresponding Hamiltonian energy of $\theta^\star$, which provides a qualitative characterization in Theorem
\ref{Thembounded}. Then, Gibbs sampling, with Hamiltonian as the energy, can be applied to draw samples whose largest eigenvalues are relatively smaller. Therefore, in order to train the coupling weights by BML such that only the EPs for the desired patterns are asymptotically stable, our objective function is an eigenvalue related term. We adopt Gibbs sampling with the Hamiltonian as energy in our training process. Our experiment results show the effectiveness of this proposed approach. To improve the performance, we have also added a Hamiltonian regularized term to our objective function. As a result, we refer to this training method as Hamiltonian-Regularized Eigenvalue Contrastive Method (HRECM).

The remainder of the paper is organized as follows. Section \ref{sec:oim-properties} introduces some basic dynamic properties of OIMs and the difference between OIMs and traditional associative memory networks. Section \ref{sec:training} presents the framework of our training method. Section \ref{sec:conclusions} presents numerical experiments to validate the effectiveness of the training. Section \ref{sec:conclusions} concludes the paper.

{\emph{Notation}}: Throughout the paper, we will use standard notation.  We use $\mathbb{R}$ to denote the set of all real numbers. The torus is the set $\TT = (-\pi,\pi]$. Given vectors $x_1,x_2, \ldots, x_N$, ${\rm col}\{x_1,x_2,\ldots,x_N\}=\left[x_1^{\rm T}\;x_2^{\T}\ldots x_N^{\T}\right]^{\T}$.
We use $\{a,b\}^N$ to denote the set of vectors $\{\col\{x_1,x_2,\ldots,x_N\}: x_i \in \{a,b\},i=1,2,\ldots,N\}$.
Consider a vector $s\in \R^N$, we use $s[s_k: =a]$ to denote the vector whose $k$-th component is $a$ and whose other components are the same as in $s$.

\section{The properties of OIMs}
\label{sec:oim-properties}

\subsection{The stability of $0/\pi$ binary EPs}
The dynamics of an OIM of $N$ oscillators is given by
\begin{equation}\label{dyOIM}
 \dot \theta_{i} = -K\sum_{j=1}^N J_{ij}\sin(\theta_{i}-\theta_{j})-K_{\rm s}\sin(2\theta_{i}),
\end{equation}
where $\theta_{i} \in \TT$ is the $i$-th phase of the vector $\theta=\col\{\theta_{1},\theta_{2},\ldots,\theta_{N}\}$, $K,K_{\rm s} >0$ are tunable parameters, and $J_{ij} \in \R$ is the coupling weight between the $i$-th and $j$-th oscillators, with $J_{ij}=J_{ji}$ and $J_{ii}=0$. It can be verified that the energy function
\begin{equation}\label{enrgyOIM}
E(\theta)=-K\!\sum_{i=1}^N\!\sum_{j=1}^N\!J_{ij}\cos(\theta_{i}-\theta_{j})\!-\!K_{\rm s}\sum_{i=1}^N\!\cos(2\theta_{i})
\end{equation}
is decreasing along the trajectory of \eqref{dyOIM}. In addition, when $\theta \in \{0,\pi\}^N$, $E(\theta)$ coincides with the Ising Hamiltonian
\[H(s) = -\sum_{i<j}^NJ_{ij}s_{i}s_{j}\]
scaled by a factor of $2K$ on $s \in \{+1,-1\}^N$
(with $s_{i}=+1$ if $\theta_{i}=0, s_{i}=-1$ if $\theta_{i}=\pi$),  up to an additive constant $-NK_{\rm s}$. It can be verified that any vector $\theta^\star \in \{0,\pi\}^N$ or $\theta^\star \in \{-\frac{\pi}{2},\frac{\pi}{2}\}^N$ is a structurally stable EP of \eqref{dyOIM}, referred to as the $0/\pi$ and $-\frac{\pi}{2}/\frac{\pi}{2}$ EPs, respectively. In \cite{cheng2024control}, it has been proven that $-\frac{\pi}{2}/\frac{\pi}{2}$ EPs are dynamically unstable and that a $0/\pi$ EP $\theta^\star$ is asymptotically stable if and only if $A(\theta^\star) = KD(\theta^\star)-2K_{\rm s}I_N$ is negative definite, that is,
\begin{equation}\label{Stabilitycon}
    \frac{K_{\rm s}}{K} >\frac{\lambda_N(D(\theta^\star))}{2},
\end{equation}
where $\lambda_N(D(\theta^\star))$ is the largest eigenvalue of $D(\theta^\star)$, and $D(\theta^\star)$ is a symmetric matrix, with diagonal element $D_{ii}(\theta^\star) = -\sum_{j=1}^NJ_{ij}\cos(\theta_{i}^\star-\theta_{j}^\star)$ and off-diagonal element $D_{ij}(\theta^\star) = J_{ij}\cos(\theta_{i}^\star-\theta_{j}^\star)$. Although there are other EPs (see \cite{cheng2024control}), we focus on $0/\pi$ EPs in this work.

\subsection{The difference between OIMs and traditional associative memory networks}

Capacity is an important metric to measure the performance of the coupling design for traditional associative memory. On the contrary, for OIMs, capacity is not a good metric because all $0/\pi$ binary EPs can be asymptotically stable, regardless of the coupling weights, if $K_{\rm s}/K$ is large enough. To see this, an obvious example is that for OIMs and Hopfield networks, both on an unweighted all-to-all connected graph, OIMs can effectively store $2^N$ binary patterns, while Hopfield networks can only store two binary patterns, $\col\{+1,+1,\ldots,+1\}$ and $\col\{-1,-1,\ldots,-1\}$. See Fig. \ref{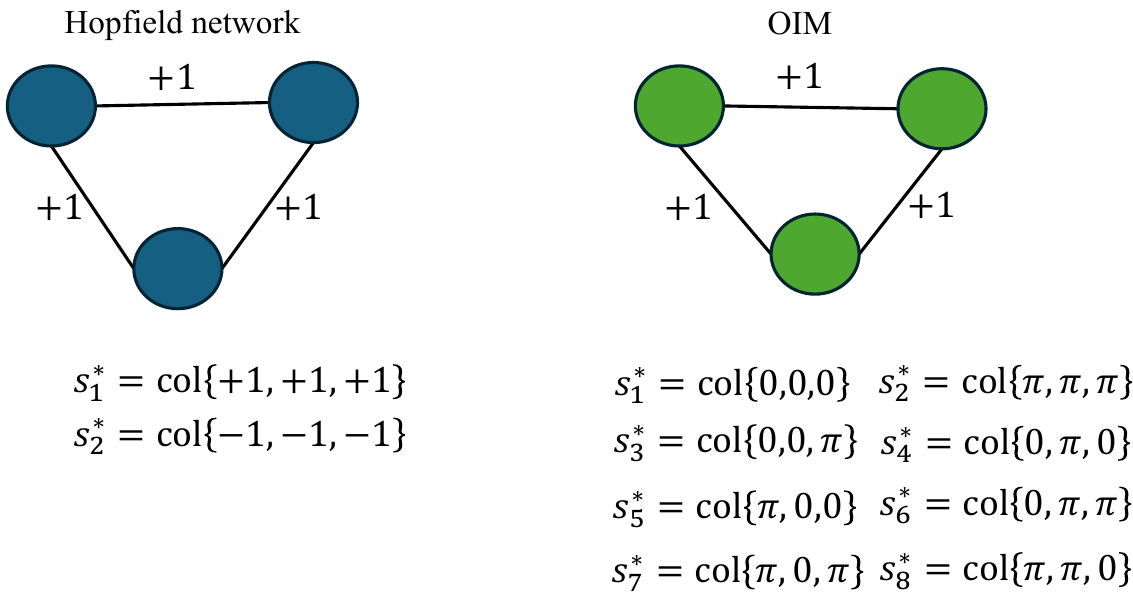} for an illustration.

\begin{figure}[htb]
    \centering
    \includegraphics[width=3in]{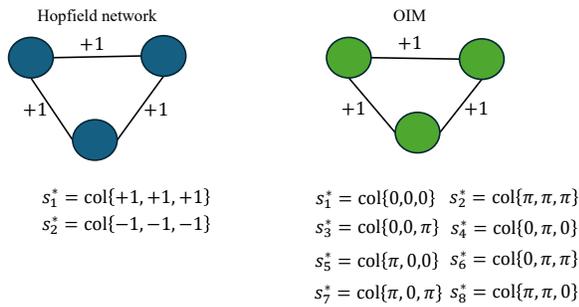}
    \caption{The difference of capacity between Hopfield networks and OIMs on an unweighted graph.}
    \label{Hop_OIM.pdf}
\end{figure}

However, when the number of desired patterns is very small, making the desired patterns asymptotically stable EPs by large enough $K_{\rm s}/K$ could introduce many asymptotically stable $0/\pi$ spurious EPs if the coupling weights are not properly designed. Therefore, for OIMs to realize Hopfield-like associative memory, the design of their coupling weights is essential. We note that the coupling designs for OIMs cannot eliminate any $0/\pi$ EPs as they are structurally stable with respect to coupling weights. On the other hand, according to \eqref{Stabilitycon}, the coupling weights can affect the dynamic stability of these EPs. As a result, we may train the coupling weights so that only desired EPs are asymptotically stable to store the desired patterns and other EPs are unstable. Thus, an OIM will not reliably retrieve patterns if they were stored in those unstable EPs, reducing, in effect, the number of spurious patterns.

Since capacity is not a good metric for OIMs, a new metric should be defined to measure the performance of the OIMs as associative memory. Such a new metric should be related to the number of unwanted EPs that are asymptotically stable. We will discuss this metric in Section \ref{sec:experiments}.


\section{Training OIMs for assigning stability of $0/\pi$ EPS}
\label{sec:training}

\subsection{Gibbs sampling}
Given a network system, equipped with an energy $E(s)$ that is related to the system state $s$, the samples obtained from Gibbs sampling tend to have lower energy. Specifically, for an associative memory network of $N$ neurons, with the $k$-th neuron's state $s_k \in \{+1,-1\}$, if $s_k$ sequentially updates to $+1$ with the probability
\begin{equation}\label{Gibbs}
p(s_k=+1) = \frac{1}{1+\exp\{-\Delta E(s)\}},
\end{equation}
where $\Delta E(s) = E(s[s_k:=-1])-E(s[s_k:=+1])$, then the state $s\in \{+1,-1\}^N$ will obey the probability distribution
\[p(s) = \frac{\exp\{-E(s)\}}{Z},\]
where
\[Z = \sum_{s \in S}\exp\{-E(s)\}\]
and $S$ is the set of all binary states. The update law \eqref{Gibbs} is the Gibbs sampling.

In spin systems, the energy is often characterized by the Hamiltonian, that is,  $E(s)=H(s) = -\sum_{i<j}^NJ_{ij}s_is_j$. In this case, \eqref{Gibbs} takes the form
\begin{equation}\label{HGibbs}
    p(s_k=+1) = \frac{1}{1+\exp\{-2\sum_{j}J_{kj}s_j\}},
\end{equation}
which is used by Hinton for BML \cite{ackley1985learning,hinton2002training}. To distinguish \eqref{HGibbs} from \eqref{Gibbs}, we refer to \eqref{HGibbs} as Hamiltonian Gibbs sampling.

\subsection{The stability and Hamiltonian  of $0/\pi$ EPs}

When $\theta_{i}^\star \in \{0,\pi\}$ and $s_{i}\in \{+1,-1\}$, it can be verified that $s_{i}s_{j} = \cos(\theta_{i}^\star-\theta_{j}^\star)$ (with $s_{i}=+1$ if $\theta_{i}^\star=0$, $s_{i}=-1$ if $\theta_{i}^\star=\pi$). Thus, the matrix
\[A(\theta^\star) := A(s) =KD(s)-2K_{\rm s}I_N,\]
where $D(s)$ is a symmetric matrix, with diagonal element $D_{ii}(s)=-\sum_{j=1}^NJ_{ij}s_{i}s_{j}$ and off-diagonal element $D_{ij}(s)=J_{ij}s_{i}s_{j}$. Let $S$ be the set of all binary patterns and $S_{\rm d} = \{s^\star_d \in \{+1,-1\}^N:d=1,2,\ldots,m\}$ be the set of $m$ desired patterns.  An OIM is said to achieve perfect associative memory if
\begin{equation}\label{perfectAM}
\max_{s^\star_{d} \in S_{\rm d}}\lambda_N(D(s^\star_d))<\min_{s\in S \setminus S_{\rm d}}\lambda_N(D(s)).
\end{equation}
Inequality \eqref{perfectAM} is said to be perfect for associative memory because  a choice of  $\frac{K_{\rm s}}{K}$ such that \[\max_{s^\star_d \in S_{\rm d}}\lambda_N(D(s^\star_d))< \frac{K_{\rm s}}{K}<\min_{s\in S \setminus S_{\rm d}}\lambda_N(D(s))\]
ensures that, by \eqref{Stabilitycon},
the $m$ desired $0/\pi$ EPs are asymptotically stable and all other $0/\pi$ EPs are unstable. Our extensive experiments indicate that there is an approximate positive correlation between $\lambda_N(D(s))$ and $H(s)$, and the next theorem specifically characterizes this connection.

\begin{theorem}\label{Thembounded}
Consider an OIM of $N$ oscillators with given coupling weights $J_{ij}$. Let $\lambda_N(D(s))$ be the largest eigenvalue of the matrix $D(s)$ and $H(s)$ be the Hamiltonian. Then, for any $s \in \{+1,-1\}^N$,
\begin{eqnarray}\label{bound}
&&\max\left\{0,\tfrac{2H(s)}{N}\right\}
\le\ \lambda_N\bigl(D(s)\bigr)
  \notag\\
\le &&\frac{2+2\sqrt{(\kappa N-1)(N-1)}}{N}\,H(s)+ d.
\end{eqnarray}
where
\begin{eqnarray*}
\kappa &=& \max_{s \in S}\left\{\frac{{\rm tr}(\hat D(s)^2)}{{\rm tr}^2(\hat D(s))}\right\},\\
d &=& c\sqrt{(\kappa N-1)(N-1)},\\
\hat D(s) &=& D(s)+cI_N,
\end{eqnarray*}
with $c$ being a constant such that
\[c>\frac{-2H(s)}{N},\;s\in S.\]
\end{theorem}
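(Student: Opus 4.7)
The plan is to attack the two sides of the inequality separately. For the lower bound, I would first note two elementary facts: the diagonal of $D(s)$ is built so that $\tr(D(s))=2H(s)$, and the combination of the off-diagonal structure with $J_{ii}=0$ forces each row of $D(s)$ to sum to zero, putting $\mathbf{1}$ in the kernel. The first fact gives $\lambda_N(D(s))\ge 2H(s)/N$ because the largest eigenvalue always dominates the mean of the spectrum; the second gives $\lambda_N(D(s))\ge 0$ because $0$ is an eigenvalue. Taking the maximum of these two bounds recovers the claimed lower bound.

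The upper bound is the main work, and the natural tool is a Samuelson / Wolkowicz--Styan style inequality that controls the top eigenvalue of a real symmetric matrix by its trace and Frobenius norm. Writing the eigenvalues of a symmetric $M$ as $\mu_1\le\cdots\le\mu_N$ with mean $\bar\mu$, Cauchy--Schwarz applied to the identity $\mu_N-\bar\mu=\sum_{i<N}(\bar\mu-\mu_i)$ gives $\mu_N\le\bar\mu+\sqrt{(N-1)/N}\,\sqrt{\tr(M^2)-N\bar\mu^2}$. I would apply this not to $D(s)$ directly, but to the shifted matrix $\hat D(s)=D(s)+cI_N$, whose eigenvalues are those of $D(s)$ rigidly translated by $c$.

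The shift is essential: the ratio $\tr(\hat D^2)/\tr^2(\hat D)$ is what governs the spectral spread, and to be usable as a uniform bound $\kappa$ across all $s\in S$ we need $\tr(\hat D(s))>0$, which is exactly what the hypothesis $c>-2H(s)/N$ buys. With that in hand, the definition of $\kappa$ yields $\tr(\hat D^2)\le \kappa\,\tr^2(\hat D)=\kappa N^2\bar\mu^2$, so the variance term collapses to $\sqrt{(N-1)(\kappa N-1)}\,\bar\mu$, and the Samuelson inequality specializes to $\mu_N\le\bar\mu\bigl(1+\sqrt{(N-1)(\kappa N-1)}\bigr)$. Subtracting $c$ to pass back to $\lambda_N(D(s))=\mu_N-c$ and substituting $\bar\mu=(2H(s)+Nc)/N$ collects terms into the stated right-hand side, with the constant $d=c\sqrt{(\kappa N-1)(N-1)}$ emerging precisely from the cross term $c\sqrt{(N-1)(\kappa N-1)}$.

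The step I expect to be most delicate is not the algebra but the sign management: pulling $\bar\mu$ out of the square root requires $\bar\mu>0$ so that $|\bar\mu|=\bar\mu$, and that is exactly what the assumption $c>-2H(s)/N$ ensures. One should also check that the same $c$ makes $\kappa$ finite and independent of $s$, so the bound holds uniformly. Everything else is careful bookkeeping of the trace identities $\tr(D(s))=2H(s)$ and $\tr(\hat D(s))=2H(s)+Nc$.
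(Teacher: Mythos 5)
Your proposal is correct and follows essentially the same route as the paper: the lower bound from the zero row sums and $\tr(D(s))=2H(s)$, and the upper bound from the Samuelson/Wolkowicz--Styan inequality applied to the shifted matrix $\hat D(s)$ together with $\tr(\hat D(s)^2)\le\kappa\,\tr^2(\hat D(s))$. The only cosmetic differences are that you derive the spectral bound from Cauchy--Schwarz where the paper cites Wolkowicz--Styan directly, and you make explicit the sign condition $\tr(\hat D(s))>0$ that the paper states only as $\tr(\hat D(s))\ge 0$.
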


\begin{proof}
See Appendix.
\end{proof}

Theorem \ref{Thembounded} shows that the stability of a $0/\pi$ binary EP of an OIM is roughly related to the EP's Hamiltonian, that is, the largest eigenvalue of $D(s)$ corresponding to a $0/\pi$ binary EP with a lower Hamiltonian tends to be smaller than that of $D(s)$ corresponding to a $0/\pi$ binary EP with a higher Hamiltonian. Thus, a sample $s$ drawn by Hamiltonian Gibbs sampling tends to have smaller $\lambda_N(D(s))$ compared to the other states. This property will be explored in the next subsection.

\subsection{Hamiltonian-regularized eigenvalue contrastive method}

Since the desired coupling weights should be such that \eqref{perfectAM} is satisfied, the coupling weights can be trained by maximizing the objective function $L(s^\star_d)$, that is,
\begin{eqnarray}
\max_{J_{ij}} L(s^\star_d)
&=& \min_{s\in S\setminus S_{\mathrm{d}}} \lambda_N\bigl(D(s)\bigr)
  -\max_{s^\star_d\in S_{\mathrm{d}}} \lambda_N\bigl(D(s^\star_d)\bigr)\nonumber\\
&&+\;\alpha P(s^\star_d),\label{objective}
\end{eqnarray}
where we refer to the first two terms as the eigenvalue contrastive term and the last term is the regularized term. Also,
\[P(s^\star_d) = \sum_{s^\star_d\in S_{\mathrm{d}}}
  \frac{\exp\left\{-H(s^\star_d)\right\}}{Z},\]
where $H(s^\star_d)$ is the Hamiltonian,
\[Z = \sum_{s \in S} \exp\{-H(s)\},\]
and $\alpha>0$ is the regularized factor.

Maximizing $L(s^\star_d)$ can be achieved by computing the gradient as the updating direction. At each step, with given coupling weights $J_{ij}$'s, let
\begin{eqnarray*}
S^{m} &=& \arg_{s\in S\setminus S_{\rm d}}\min \lambda_N(D(s)),\\
S^{M} &=& \arg_{s^\star_d\in S_{\rm d}} \max \lambda_N(D(s^\star_d)),
\end{eqnarray*}
$s^{\min} \in S^m$ and $s^{\max} \in S^M$ be two states randomly chosen from $S^m$ and $S^M$, respectively, and
\begin{eqnarray*}
v^{\min} &=& \col\{v^{\min}_1, v^{\min}_2,\ldots, v^{\min}_N\},\\
v^{\max} &=& \col\{v^{\max}_1,v^{\max}_2,\ldots, v^{\max}_N\}
\end{eqnarray*}
be the normalized eigenvectors associated with $\lambda_N(D(s^{\min}))$ and $\lambda_N(D(s^{\max}))$, respectively. Since $\lambda_N(D(s))$ is Lipchitz continuous, but not necessarily differentiable with respective to the coupling weights $J_{ij}$, we adopt the generalized (Clark) gradient \cite{overton1992large}. If $\lambda_N(D(s))$ has multiplicity $p$, then
\begin{eqnarray*}
 &&\frac{\partial \lambda_N(D(s))}{\partial J_{ij}} = \left\{\rho_{ij}\in \R: \rho_{ij}= \left<\!Q_1(s)UQ_1^\T(s),\frac{\partial D(s)}{\partial J_{ij}}\right>,\right. \\
 &&\quad\left.\text{for some } U\in \R^{p\times p},U\ge 0, {\rm tr}(U)=1\rule{0cm}{.45cm}\right\},
\end{eqnarray*}
where the columns of $Q_1(s)\in \R^{n \times p}$ form an orthonormal set of (generalized) eigenvectors of $\lambda_N(D(s))$, with the first column as an eigenvector, ${\rm tr}(U)$ is the trace of the matrix $U$, and $\left<Q_1(s)UQ_1^\T(s),\frac{\partial D(s)}{\partial J_{ij}}\right>$ denotes the Frobenius inner product of two matrices. Note that the Clark gradient is set-valued, and thus we represent (approximate) the gradient by fixing
\[U = \left[\begin{array}{cccc}
    1 & 0 & \cdots & 0 \\
    0 & 0 & \cdots & 0 \\
    \vdots & \vdots & \ddots & \vdots\\
    0 & 0 & \cdots& 0
\end{array} \right].\]
Therefore, the gradient can be written as
\begin{eqnarray}\label{eigngrad}
    \frac{\partial \lambda_N(D(s))}{\partial J_{ij}} &\approx& \left<Q_1(s)UQ_1^\T(s),\frac{\partial D(s)}{\partial J_{ij}}\right> \nonumber \\
    &=& {\rm tr}\left(v(s)v^\T(s)\frac{\partial D(s)}{\partial J_{ij}}\right) \nonumber\\
    &=& {\rm tr}\left(v^\T(s)\frac{\partial D(s)}{\partial J_{ij}}v(s)\right) \nonumber\\
    &=& v^\T(s)\frac{\partial D(s)}{\partial J_{ij}}v(s),
\end{eqnarray}
where $v(s)$ is an eigenvector associated with $\lambda_N(D(s))$. The equality holds if and only if $\lambda_N(D(s))$ has multiplicity one. Similarly, since $L(s_d^\star)$ is not necessarily differentiable, the Clark gradient is applied. Specifically,
\begin{eqnarray}
    \frac{\partial L(s_d^\star)}{\partial J_{ij}}& = &{\rm conv}\left\{\frac{\partial \lambda_N(D(s))}{\partial J_{ij}}:s \in S^m\right\} \nonumber \\
    &&\!\!\!\!- {\rm conv}\left\{\frac{\partial \lambda_N(D(s))}{\partial J_{ij}}:s \in S^M\right\} \nonumber \\
    &&+ \alpha \frac{\partial P(s_d^\star)}{\partial J_{ij}}  \nonumber \\
    &\approx& \frac{\partial \lambda_N(D(s^{\min}))}{\partial J_{ij}} - \frac{\partial \lambda_N(D(s^{\max}))}{\partial J_{ij}} \nonumber \\
    &&+ \alpha \frac{\partial P(s_d^\star)}{\partial J_{ij}}, \label{gradL1}
\end{eqnarray}
where ${\rm conv}$ denotes the convex hall operator. The equality holds in \eqref{gradL1} if and only if $S^m$ and $S^M$ have a single element, respectively.
Then, by \eqref{eigngrad} and \eqref{gradL1}, the gradient of $L(s_d^\star)$ is calculated as
\begin{eqnarray}\label{Grad}
    \Delta J_{ij} &=& \frac{\partial L(s^\star_d)}{\partial J_{ij}} \nonumber \\
    &\!\!\!\!\approx &\!\! \frac{\partial \lambda_N(D(s^{\min}))}{\partial J_{ij}} - \frac{\partial \lambda_N(D(s^{\max}))}{\partial J_{ij}} + \alpha\frac{\partial P(s^\star_d)}{\partial J_{ij}}\nonumber \\
    & \!\!\!\! \approx &\!\! (v^{\min})^\T \frac{\partial D(s^{\min})}{\partial J_{ij}}v^{\min} - (v^{\max})^\T \frac{\partial D(s^{\max})}{\partial J_{ij}}v^{\max}  \nonumber \\ && + \alpha\frac{\partial P(s^\star_d)}{\partial J_{ij}}\nonumber \\
    & \!\!\!\!= & \!\!\!- (v^{\min}_i\!\!-\!v^{\min}_j)^2s^{\min}_is^{\min}_j + (v^{\max}_i\!\!-\!v^{\max}_j)^2s^{\max}_is^{\max}_j \nonumber \\
    &&+\alpha \frac{\partial P(s^\star_d)}{\partial J_{ij}}.
\end{eqnarray}
Although $s^{\min}$ can be obtained by searching in the finite space $S\setminus S_{\rm d}$, it is computationally expensive for a large value of $N$. To address the issue, we adopt the Hamiltonian Gibss sampling to approximate $s^{\min}$. Since Theorem \ref{Thembounded} establishes that $\lambda_N(D(s))$ tends to be smaller if the Hamiltonian $H(s)$ is smaller, $s^{\min}$ can be approximated by $\hat s^{\min}$ drawn from Hamiltonian Gibbs sampling. Let $S^-= \{s^-_k:k=1,2,\ldots,n\}$ be the set of $n$ samples from Hamiltonian Gibbs sampling,
\[\hat s^{\min} = \arg_{s\in S^-} \min \lambda_N(D(s)),\]
and $\hat v^{\min}$ be the normalized eigenvector associated with $\lambda_N(D(\hat s^{\min}))$. Then, the gradient \eqref{Grad}
can be approximated by
\begin{eqnarray}\label{ApproxGrad}
\Delta J_{ij} &\!\approx\!& - (\hat v^{\min}_i\!\!-\!\hat v^{\min}_j)^2\hat s^{\min}_i \hat s^{\min}_j + (v^{\max}_i\!\!-\!v^{\max}_j)^2s^{\max}_is^{\max}_j \nonumber \\
    &&+\alpha \left( \left<s_{i}s_{j}\right>_{S_{\rm d}} -<s_{i}s_{j}>_{S^-}\right),
\end{eqnarray}
where $<s_{i}s_{j}>_{S_{\rm d}}$ denotes the expectation of $s_{i}s_{j}$ over all $s \in S_{\rm d}$, and  $<s_{i}s_{j}>_{S^-}$ denotes the expectation of $s_{i}s_{j}$ over all $s \in S^-$.  Based on this approximate, we propose the following algorithm for training $J_{ij}$.

\begin{table}[htbp!]
  \centering
  \setlength{\tabcolsep}{4pt}
  \renewcommand{\arraystretch}{0.9}
  \begin{tabularx}{\columnwidth}{@{}l l X@{}}
    \toprule
    \multicolumn{3}{@{}l}{\textbf{Algorithm 1:} Training Procedure for OIMs}\\
    \midrule
    \textbf{Step 1}   &              & \!\!\!\!\!\!\!\!\!\!\!\!\!\!\!\!\!\!\!\!\!\!\!\!\!\!\!Set $\alpha$, $\epsilon$, desired patterns $\{s^\star_d\}_{d=1}^m$, and initialize $J_{ij}$.\\
    \addlinespace[0.3ex]
    \textbf{Step 2}   &              & \!\!\!\!\!\!\!\!\!\!\!\!\!\!\!\!\!\!\!\!\!\!\!\!\!\!\!Obtain samples $\{s_k^-\}_{k=1}^n$; find $\hat s^{\min}$ and $s^{\max}$:\\
    \addlinespace[0.3ex]
                      & Step 2.1     & Sample $\{s_k^-\}_{k=1}^n$ according to Eq.\eqref{HGibbs}.\\
                      & Step 2.2     & $\hat s^{\min}=\arg\min\lambda_N(D(s_k^-))$, compute its normalized eigenvector $\hat v^{\min}$.\\
                      & Step 2.3     & $s^{\max}=\arg\max\lambda_N(D(s^\star_d))$, compute its normalized eigenvector $v^{\max}$.\\
    \addlinespace[0.3ex]
    \textbf{Step 3}   &              &\!\!\!\!\!\!\!\!\!\!\!\!\!\!\!\!\!\!\!\!\!\!\!\!\!\!\!\!\! Update $J_{ij}$'s by the patterns and samples:\\
    \addlinespace[0.3ex]
                      & Step 3.1     & Compute the first term in \eqref{ApproxGrad} by $\hat s^{\min}, \hat v^{\min}$.\\
                      & Step 3.2     & Compute the second term in \eqref{ApproxGrad} by $ s^{\max},  v^{\max}$.\\
                      & Step 3.3     & Compute the last term by $\alpha$, $\{s^\star_d\}_{d=1}^m$ and $\{s_k^-\}_{k=1}^n$.\\
                      & Step 3.4     & $J_{ij}(k+1) = J_{ij}(k) + \epsilon \Delta J_{ij}$ \\
    \textbf{Step 4}   &              & \!\!\!\!\!\!\!\!\!\!\!\!\!\!\!\!\!\!\!\!\!\!\!\!\!\!\!Repeat Steps 2 and 3 until stopping criterion is met.\\
    \bottomrule
  \end{tabularx}
\end{table}


\section{Numerical experiments}\label{sec:experiments}

\textbf{Experiment 1.} In this experiment, we show the effectiveness of our training method. We consider an OIM of $N=20$ oscillators and randomly choose $15$ desired patterns. In the Gibbs sampling stage, we draw $n=500$ samples.  For training parameters, we set the regularization factor $\alpha =0.8$, the learning rate $\epsilon=0.01$, and perform $T=1,100,300,2000$ training iterations. The $(H(s),\lambda_N(D(s))):=(E(\theta^\star),\lambda_{N}(D(\theta^\star)))$ plane is shown in Fig.~\ref{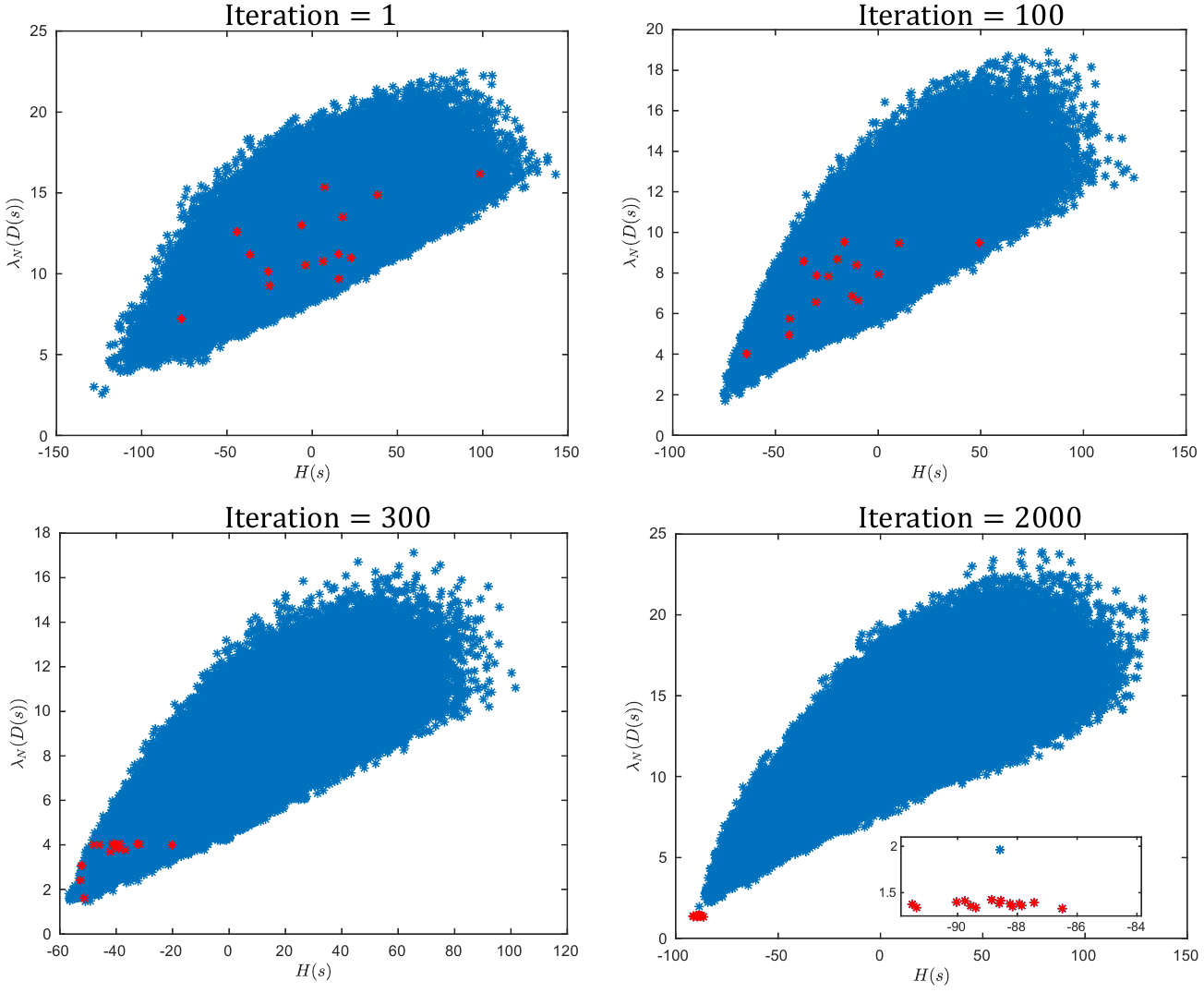} for the four training results. Note that every $0/\pi$ EP $\theta^\star$ and its negative $\theta^\star+\col\{\pi,\pi,\ldots,\pi\}$ have the same Hamiltonian and the corresponding eigenvalues of $D(s)$. Thus, on each plot, there are $2^{19} = 524,288$ points, each corresponding to two $0/\pi$ EPs. The red points are the desired patterns, and the blue points are spurious patterns. We observe that during the training process the largest eigenvalues corresponding to the desired patterns decrease and eventually become the smallest $15$ eigenvalues, separating themselves from all other $0/\pi$ EPs. As a result, $\frac{K_{\rm s}}{K}$ can be appropriately chosen so that all the desired $0/\pi$ EPs are asymptotically stable and all other $0/\pi$ EPs are unstable.

\begin{figure*}
    \centering
    \includegraphics[width=6.32in]{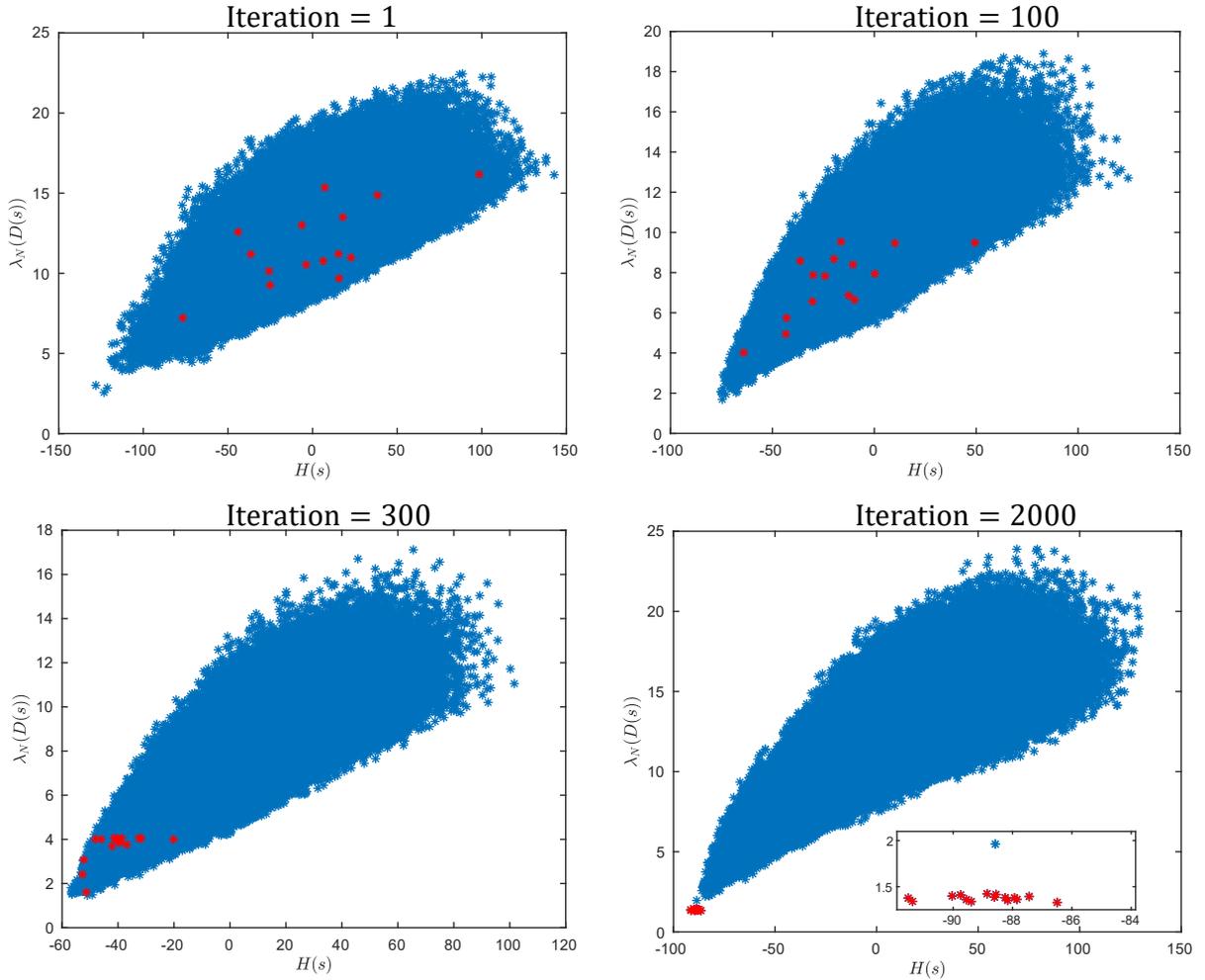}
    \caption{Evolution of the training process}
    \label{EX1.pdf}
\end{figure*}

\textbf{Experiment 2.} In this experiment, we compare the performance of our Algorithm 1 with and without regularization. We consider four OIMs of $N=10,15,20,25$ oscillators, respectively. For each OIM, we test Algorithm 1 with and without regularization ($\alpha =0$ and $\alpha = 1$), with the number $m$ of desired patterns varying from five to $30$  in increments of five. We define the spurious rate as $R_{\rm s} = \frac{a}{2^{N-1}}$ to be the metric for measuring the performance, where $a$ is the number of $0/\pi$ spurious EPs whose $\lambda_N(D(\theta^\star))$'s are less than that of at least one of the desired patterns. In computing $a$, each $0/\pi$ EP $\theta^\star$ and its negative $\theta^\star+{\rm col}\{\pi,\pi,\ldots,\pi\}$ are treated as a single entity and counted only once. Clearly, $ 0 \leq R_{\rm s} < 1$. Values of $R_{\rm s}$ closer to zero indicate better performance, and OIMs achieve perfect associative memory if $R_{\rm s} =0 $. For all cases, we fix all training parameters at the same values, the learning rate $\epsilon = 0.01$, the training iterations $T=5000$, the number of samples in Hamiltonian Gibbs sampling $n=500$, except for the regularization factor $\alpha=1$ for the regularized algorithm and $\alpha=0$ for the non-regularized algorithm. Fig.~\ref{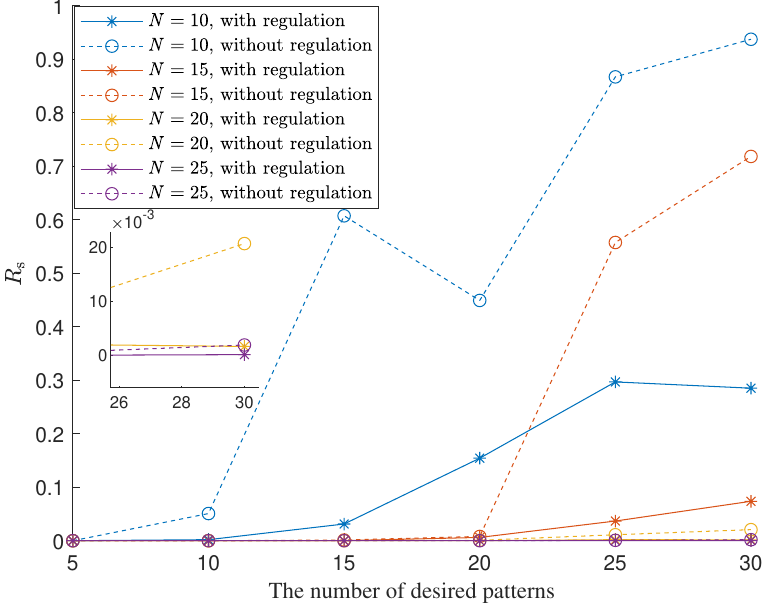} shows the evolution of $R_{\rm s}$.

We observe that values $R_{\rm s}$ of the regularized algorithm are smaller than or equal to those of the non-regularized algorithm. Specifically, they are almost the same when the number $m$ of the desired patterns is smaller than or equal to the size $N$ of the network, while the values $R_{\rm s}$ of the regularized algorithm are much smaller than those of the non-regularized algorithm when $m>N$.

\begin{figure}[h!]
    \centering
    \includegraphics[width=3.4in]{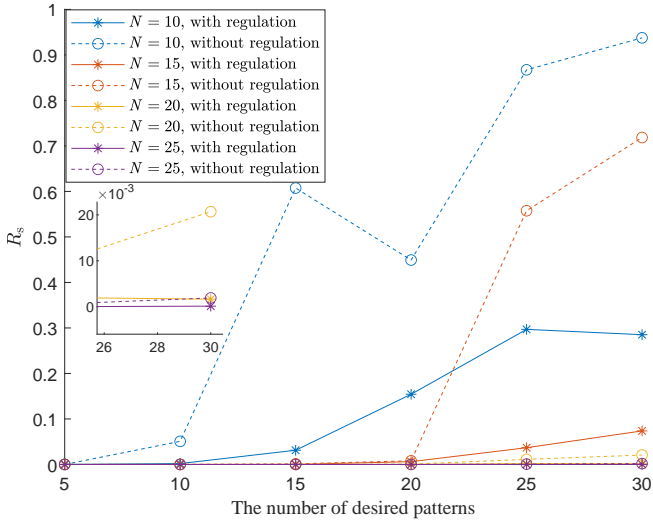}
    \caption{The comparison of the algorithm with and without regularization}
    \label{ALgorithmcompare1.pdf}
\end{figure}

\textbf{Experiment 3.} In this experiment, we consider the regularized algorithm for larger networks. The network size $N$ varies from $20$ to $70$ in increments of five. For each OIM, we test the algorithm with the number $m$ of desired patterns varying from $15$ to $70$ in increments of five. For a larger network, the spurious rate is difficult to determine as $a$ counts all $0/\pi$ spurious EPs. Theorem \ref{Thembounded} indicates that the largest eigenvalues in samples obtained from the Hamiltonian Gibbs sampling tend to be smaller among all $0/\pi$ EPs. Thus, instead of enumerating all $0/\pi$ spurious EPs, we sample them by Hamiltonian Gibbs sampling and then evaluate the spurious rate. Specifically, we revise the spurious rate as $R_{\rm s}= \frac{a}{b}$, where $b$ is the number of distinct $0/\pi$ EPs obtained by Hamiltonian Gibbs sampling, and $a$ is the number of $0/\pi$ spurious EPs in samples whose $\lambda_{N}(D(\theta^\star))$'s are less than those of at least one of the desired $0/\pi$ EPs. In the training process, we fix the training parameters, the learning rate $\epsilon=0.01$, the training iterations $T=5000$, the number of samples from Hamiltonian Gibbs sampling $n=500$, the regularization factor $\alpha =1$. After the end of the training, we send the resulting coupling weights to the Hamiltonian Gibbs sampling and sample $20000$ times. Note that $20000$ times of sampling generally do not return $20,000$ distinct EPs. Thus, $b<20,000$ in general. Clearly, $0\leq R_{\rm s}< 1$. The evolution of $R_{\rm s}$ is shown in Fig.~\ref{R_sVsPatterns_over_N_from_20_to_70.pdf}. We observe that for a fixed size $N$, the overall tendency of the $R_{\rm s}$ value is increasing as the number of the desired $0/\pi$ EPs increases, while for a fixed number of desired patterns, the $R_{\rm s}$ value is increasing as the size $N$ decreases.

\begin{figure}[h!]
    \centering
    \includegraphics[width=3.4in]{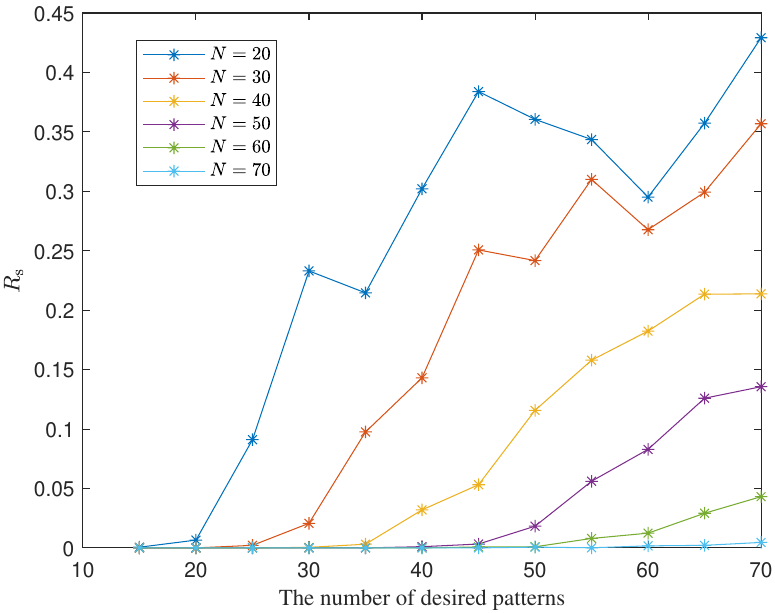}
    \caption{The evolution of $R_{\rm s}$ for larger OIMs: with regularization}
\label{R_sVsPatterns_over_N_from_20_to_70.pdf}
\end{figure}

\section{Conclusions}
\label{sec:conclusions}

In this paper, we proposed a neuron network model that, with appropriate assignment of the stability of its equilibrium points (EPs), can achieve Hopfield-like associative memory. We adopted OIMs as candidates for such a model. Because of the structural stability of $0/\pi$ binary EPs of OIMs, the training of their coupling weights needs only to focus on achieving dynamic stability of the desired $0/\pi$ EPs and thus can be efficient. We hope the proposed method would provide a novel perspective on the framework of associative memory research.


\begin{acknowledgments}
This material is based upon work supported by the National Science Foundation under grant no. 2328961.
\end{acknowledgments}

\appendix

\section{Proof of Theorem \ref{Thembounded}}

Since the sum of each row of $D(s)$ is zero and ${\rm tr}(D(s)) = 2H(s)$, the left inequality of \eqref{bound} is trivial.  Let $\hat D(s) = D(s) + cI_N$. By the definition of $c$, we have ${\rm tr}(\hat D(s))\ge0$ for all $s \in S$. It follows from Theorem 2.1 in \cite{wolkowicz1980bounds} that
\begin{eqnarray*}
  \lambda_N(\hat D(s)) &\leq & \frac{\tr(\hat D(s))}{N}
   \!+\!\sqrt{\frac{\tr(\hat D(s)^2)-\frac{\tr^2(\hat D(s))}{N}}{N}}\sqrt{N-1} \\
   &\leq& \frac{\tr(\hat D(s))}{N}
   \!+\!\sqrt{\frac{\kappa \tr^2(\hat D(s))-\frac{\tr^2(\hat D(s))}{N}}{N}}\!\sqrt{N-1} \\
   &=& \frac{1+\sqrt{(\kappa N-1)(N-1)}}{N}\tr(\hat D(s)).
\end{eqnarray*}
Since $\lambda_N(D(s))=\lambda_N(\hat D(s))-c$,
\begin{eqnarray*}
 \lambda_N(D(s)) &\leq& \frac{1+\sqrt{(\kappa N-1)(N-1)}}{N}\tr(\hat D(s)) -c \\
 &=&\frac{1+\sqrt{(\kappa N-1)(N-1)}}{N}\left(\tr(D(s))\!+\!Nc\right) -c  \\
 &=& \frac{1+\sqrt{(\kappa N-1)(N-1)}}{N}\tr(D(s))+d\\
 &=& \frac{2+2\sqrt{(\kappa N-1)(N-1)}}{N}H(s)+d.
\end{eqnarray*}








\bibliography{apssamp}

\end{document}